
\typeout{IJCAI--25 Instructions for Authors}


\documentclass{article}
\pdfpagewidth=8.5in
\pdfpageheight=11in

\usepackage{ijcai25}

\usepackage{times}
\usepackage{soul}
\usepackage{url}
\usepackage[hidelinks]{hyperref}
\usepackage[utf8]{inputenc}
\usepackage[small]{caption}
\usepackage{graphicx}
\usepackage{amsmath}
\usepackage{amsthm}
\usepackage{booktabs}
\usepackage{algorithm}
\usepackage{algorithmic}
\usepackage[switch]{lineno}


\urlstyle{same}



\newtheorem{theorem}{Theorem}

\usepackage{natbib} 
\usepackage{dirtytalk}
\usepackage{multirow}
\usepackage{boldline}
\usepackage{adjustbox}
\usepackage{balance}
\usepackage{caption}
\usepackage{subcaption}

\input{Definitions}





\pdfinfo{
/TemplateVersion (IJCAI.2025.0)
}

\title{Pretrained Diffusion Models Are Inherently Skipped-Step Samplers}

\author{
    Wenju Xu
    \emails
    xuwenju123@gmail.com
}

\begin{document}

\maketitle

\begin{abstract}
Diffusion models have been achieving state-of-the-art results across various generation tasks. However, a notable drawback is their sequential generation process, requiring long-sequence step-by-step generation. Existing methods, such as DDIM, attempt to reduce sampling steps by constructing a class of non-Markovian diffusion processes that maintain the same training objective. However, there remains a gap in understanding whether the original diffusion process can achieve the same efficiency without resorting to non-Markovian processes. In this paper, we provide a confirmative answer and introduce skipped-step sampling, a mechanism that bypasses multiple intermediate denoising steps in the iterative generation process, in contrast with the traditional step-by-step refinement of standard diffusion inference. Crucially, we demonstrate that this skipped-step sampling mechanism is derived from the same training objective as the standard diffusion model, indicating that accelerated sampling via skipped-step sampling via a Markovian way is an intrinsic property of pretrained diffusion models. Additionally, we propose an enhanced generation method by integrating our accelerated sampling technique with DDIM. Extensive experiments on popular pretrained diffusion models, including the OpenAI ADM, Stable Diffusion, and Open Sora models, show that our method achieves high-quality generation with significantly reduced sampling steps.
\end{abstract}

%

\section{Introduction}
Diffusion models have emerged as the state-of-the-art approach in a wide range of generative tasks, including image generation \cite{dhariwal2021diffusion, rombach2022high, zhai2023feature, xu2021drb,xu2023learning}, video generation \cite{opensora}, and multi-modal generation \cite{Nair2024MaxFusionPM, Chen2024, dong2021dual}. Despite their impressive performance, one persistent challenge associated with diffusion models is the high latency during the inference phase. Typically, diffusion sampling methods can be divided into two categories: SDE (Stochastic Differential Equation)-based methods and ODE (Ordinary Differential Equation)-based methods. SDE-based methods incorporate a noise term to maintain the stochastic nature of the diffusion process, with the reverse-time SDE capturing the gradual denoising of the data. This approach often necessitates solving a reverse-time SDE, as exemplified by Denoising Diffusion Probabilistic Models (DDPM) \cite{ho2020denoising}. On the other hand, ODE-based methods reformulate the diffusion process into a deterministic one by exploiting the relationship between the forward and reverse processes. This typically involves solving a reverse-time ODE that traces the data's trajectory from noise back to the original data distribution, as seen in Denoising Diffusion Implicit Models (DDIM) \cite{song2020denoising}. Other prominent ODE-based approaches include but are not limited to score-based models \cite{Song2020ScoreBasedGM}, Probability Flow ODEs \cite{Lu2022DPMSolverAF, Xue2024AcceleratingDS}, and Pseudo-Non-Markovian Sampling (PMLS) \cite{Liu2022PseudoNM}.


\begin{figure}[t!]
    \centering
    \begin{subfigure}[t]{0.5\textwidth}
        \centering
\includegraphics[width=\textwidth]{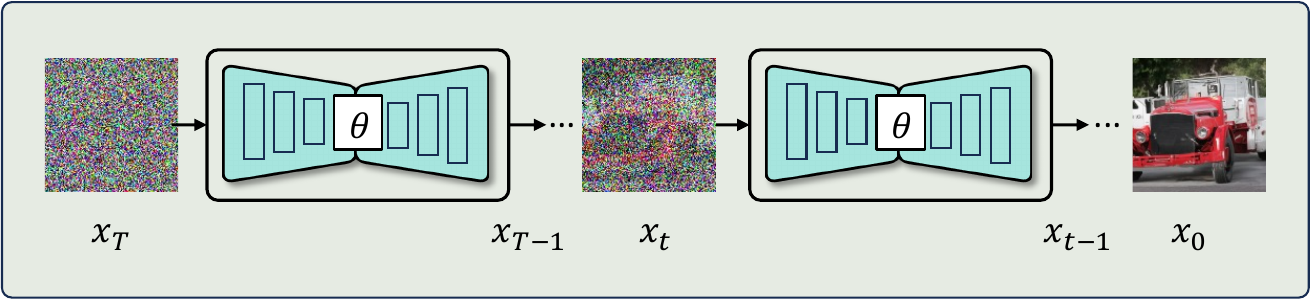}
        \caption{Standard Sampling}
    \end{subfigure}
    \begin{subfigure}[t]{0.5\textwidth}
        \centering
\includegraphics[width=\textwidth]{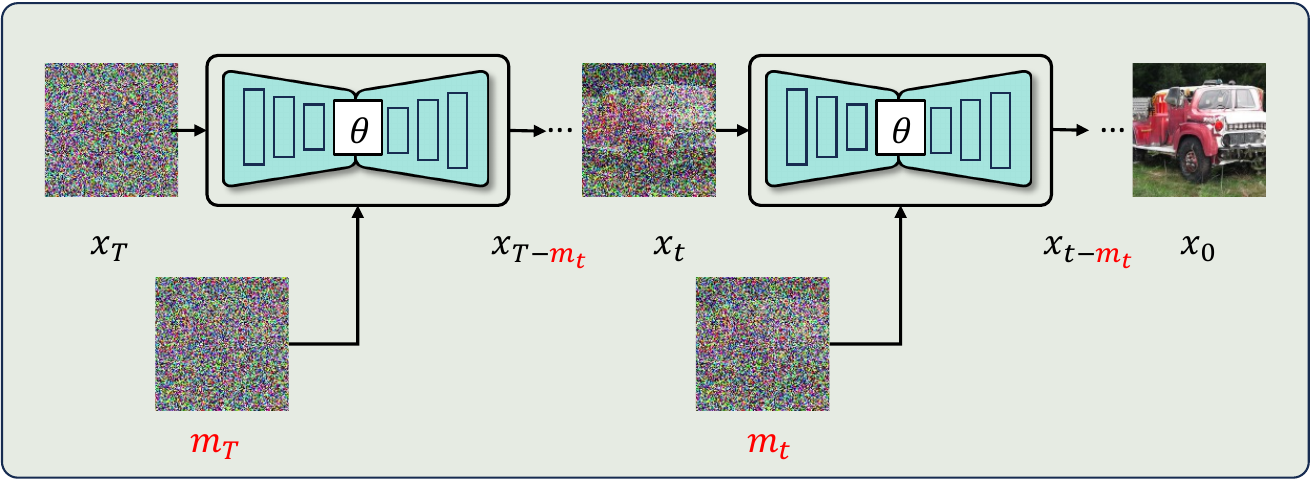}
        \caption{Skipped-Step Sampling}
    \end{subfigure}
    \caption{Overview of our proposed skipped-step diffusion sampler. Our method in principle can denoise a noisy input by skipping many intermediate denoising steps from any pretrained diffusion models. The principle is derived from the same objective function as the standard diffusion training, indicating it an inherent property of pretrained diffusion models.}
\label{fig:skipped_step}
\end{figure}

Significant efforts have been devoted to accelerating diffusion inference, with most advancements focusing on ODE-based methods. These include the development of more efficient numerical methods \cite{Xue2024AcceleratingDS} and the design of new diffusion processes that retain the same training objective \cite{song2020denoising}. DDIM, for example, introduces a class of non-Markovian diffusion processes that can achieve deterministic denoising using a subset of time steps. However, DDPM, one of the leading methods that can outperform DDIM in many cases (especially when given enough refinement iterations), still adheres to the standard iterative denoising process, often requiring a substantial number of iterations (e.g., 1000). This raises an important question: Can we uncover an intrinsic property within the DDPM framework that facilitates more efficient sampling? How can we design SDE-based sampling methods that enhance efficiency while maintaining the effectiveness and robustness of DDPM?

In this paper, we address this gap by proving that the standard diffusion model's objective is inherently equivalent to a variant that supports skipped-step sampling during generation. We define "skipped-step" sampling as the ability to denoise an input that is equivalent to performing multiple denoising steps in the standard diffusion model, for example, generating $\xb_t$ directly from $\xb_{t+\ell}$ where $\ell$ is an arbitrary integer. This discovery allows us to accelerate diffusion model inference by bypassing many of the intermediate steps typically required in DDPM, leveraging an intrinsic property of pretrained diffusion models. The overview of our proposed method is illustrated in Figure~\ref{fig:skipped_step}. To further enhance our approach, we propose an improved version to integrate it with DDIM, where we use our method to efficiently generate an initial noisy input, which is then refined by DDIM. This combination allows us to harness the strengths of both methods. Experimental results on various generation tasks, including image generation with the OpenAI ADM model \cite{dhariwal2021diffusion}, high-resolution image generation with the stable diffusion model \cite{rombach2022high, wang2025domain}, and video generation with the open Sora model \cite{opensora}, demonstrate significant improvements in both generation quality and efficiency.

Our contributions are threefold:
\begin{itemize}
\item We identify an intrinsic property of pretrained diffusion models that enables skipped-step sampling during diffusion generation.
\item We propose a novel skipped-step sampling mechanism to accelerate diffusion model inference, and we further enhance this approach by integrating it with DDIM.
\item We conduct extensive experiments across various generation tasks, including image and video generation using state-of-the-art pretrained diffusion models, showing consistent improvements in generation efficiency and quality over existing baselines.
\end{itemize}

\section{Pretrained Diffusion Models as Skipped-Step Samplers}
In this section, we introduce our proposed skipped-step sampling technique for diffusion models, leveraging arbitrary pretrained models. For consistency, we use \( t \) to denote the time step, and \(\xb_t\) represents the data at time \( t \).

\subsection{Diffusion Models and DDPM Sampling}
Diffusion models are a class of generative models that progressively transform noise into structured data through a sequence of stochastic steps. These models consist of a forward process \( q(\xb_t) \) and a parameterized reverse process \( p_{\thetab}(\xb_{t-1}|\xb_t) \), where \( t \) indexes the time step. The forward process gradually introduces Gaussian noise to the data over \( T \) time steps, defined as:
\[
    q(\xb_t | \xb_{t-1}) = \mathcal{N}(\xb_t; \sqrt{\alpha_t} \xb_{t-1}, (1-\alpha_t)\Ib)
\]
where \( \alpha_t \) is a user-defined noise schedule parameter.

The reverse process, which is parameterized, aims to denoise the data and is modeled as:
\[
    p_\thetab(\xb_{t-1} | \xb_t) = \mathcal{N}(\xb_{t-1}; \mu_\thetab(\xb_t, t), \Sigma_\thetab(\xb_t, t))~,
\]
In practice, we usually set $\Sigma_\thetab(\xb_t, t) = \sigma_t\Ib$ with $\{\sigma_t\}$ a user-defined sequence for simplicity. The training objective is to learn the parameters \( \thetab \) by minimizing a variant of the variational bound on the negative log-likelihood:
\begin{align}\label{eq:obj}
    L(\thetab) &= \mathbb{E}_q \left[ \sum_{t=1}^T D_{KL}(q(\xb_{t-1}|\xb_t, \xb_0) \| p_\thetab(\xb_{t-1}|\xb_t)) \right. \nonumber \\
    & \left. + D_{KL}(q(\xb_{T}|\xb_0) \| p(\xb_{T})) - \log p_{\thetab}(\xb_0|\xb_1) \right]
\end{align}
Sampling from the model involves reversing the diffusion process, starting from Gaussian noise \( \xb_T \) and iteratively applying the learned reverse process as outlined in Algorithm~\ref{alg:dmsample}.

\begin{algorithm}[H]
\caption{Standard Diffusion Model Sampling}
\label{alg:dmsample}
\textbf{Input}: A pretrained diffusion model $\thetab$ \\
\textbf{Output}: A generated sample
\begin{algorithmic}[1] 
\STATE Initialize \(\xb_T \sim \mathcal{N}(0, \Ib)\); Set $t = T$.
\WHILE{$t > 0$}
\STATE Sample \(\epsilon \sim \mathcal{N}(0, \Ib)\) if \( t > 1 \), else set \(\epsilon = 0\).
\STATE Compute \(\xb_{t-1} = \frac{1}{\sqrt{\alpha_t}}\left(\xb_t - \frac{1 - \alpha_t}{\sqrt{1 - \bar{\alpha}_t}}\epsilon_{\thetab}(\xb_t, t)\right) + \sigma_t\epsilon\).
\STATE Set $t = t-1$.
\ENDWHILE
\STATE \textbf{return} \(\xb_0\)
\end{algorithmic}
\end{algorithm}

This framework enables diffusion models to generate high-quality samples by effectively learning to denoise step by step, utilizing both forward and reverse stochastic processes. However, as mentioned earlier, this step-by-step sampling procedure results in significant latency during generation. In the subsequent sections, we present our method that facilitates skipped-step sampling in diffusion generation, thereby accelerating the inference process.

\subsection{Skipped-Step Sampling in Diffusion Models}
To enable skipped-step sampling, we propose to add a multi-step noise prediction into the original diffusion model as an additional task in training. Specifically, in the forward process, we define the following multi-step noising operation at time $t$ as: $q(x_t|x_{t-m})$ for $m$ randomly sampled from $[1, t]$. Our idea is to learn the multi-step noise in the reversed model. Once achieving this, we will be able to perform skipped-step sampling in generation via the reverse model. To this end, we first formally define skipped-step sampling in Definition~\ref{def:skip} below.
\begin{definition}[Skipped-step sampling]\label{def:skip}
    Skipped-step sampling is an accelerated sampling method for diffusion model generation. The method starts from a noise input $\xb_T$, and at each iteration $t$, generates a denoise version of the input from a pretrained reverse model $\thetab$ by directly skipping $m$ intermediate steps, {\it e.g.}, $\xb_{t-m}\sim p_{\thetab}(\xb_{t-m}|\xb_t)$, until $\xb_0$. Here $m$ is an arbitrary integer satisfying $t-m\geq 0$.
\end{definition}
Based on the definition, the key is to learn the skipped-step reversed model $p_{\thetab}(\xb_{t-m}|\xb_t)$. Interestingly, we can prove that this reverse model is equivalent to the reverse model trained with standard diffusion models. Consequently, one can simply re-use a pretrained diffusion model for skipped-step sampling for free.

To prove this, we need to first define the corresponding forward noising distribution $q(\xb_{t}|\xb_{t-m})$, based on which the corresponding forward posterior distribution $q(\xb_{t-m}|\xb_t, \xb_0)$ is calculated to match the reversed model $p_{\thetab}(\xb_{t-m}|\xb_t)$. Here $m$ is an integer such that $t-m \geq 0$. Following standard diffusion model setup, with a noise schedule parameter $\alpha_t$, we define $\bar{\alpha}_t \triangleq \prod_{i=1}^t\alpha_i$. Then we have the following result stated in Theorem~\ref{them:posterior}.

\begin{theorem}\label{them:posterior}
    In the skipped-step diffusion model defined above, the forward noising and posterior distributions, $q(\xb_t|\xb_{t-m})$ and $q(\xb_{t-m}|\xb_t, \xb_0)$, have the following forms:
    \begin{align*}
        q(\xb_t|\xb_{t-m}) &= \mathcal{N}\left(\xb_t; \sqrt{\frac{\Bar{\alpha}_t}{\Bar{\alpha}_{t-m}}}\xb_{t-m}, \left(1 - \frac{\Bar{\alpha}_t}{\Bar{\alpha}_{t-m}}\right)\Ib\right) \\
        q(\xb_{t-m}|\xb_t, \xb_0) &= \mathcal{N}\left(\xb_{t-m}; \frac{\sqrt{\Bar{\alpha}_t}(1 - \Bar{\alpha}_{t-m}) }{\sqrt{\Bar{\alpha}_{t-m}}(1 - \Bar{\alpha}_t)}\xb_t \right. \\
        & \hspace{-1cm}\left.+ \frac{(\Bar{\alpha}_{t-m} - \Bar{\alpha}_t)}{\sqrt{\Bar{\alpha}_{t-m}}(1 - \Bar{\alpha}_t)}\xb_0, \frac{(\Bar{\alpha}_{t-m} - \Bar{\alpha}_t)(1 - \Bar{\alpha}_{t-m})}{\Bar{\alpha}_{t-m}(1 - \Bar{\alpha}_t)}\Ib\right)~.
    \end{align*}
\end{theorem}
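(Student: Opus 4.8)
The plan is to establish both distributions by reducing them to the familiar one-step diffusion identities and then invoking Bayes' rule for the posterior. For the forward noising distribution $q(\xb_t|\xb_{t-m})$, I would first note that the key quantity is the product of consecutive noise-schedule parameters, $\prod_{i=t-m+1}^{t}\alpha_i = \bar{\alpha}_t/\bar{\alpha}_{t-m}$, which follows immediately from $\bar{\alpha}_t = \prod_{i=1}^t\alpha_i$ by cancelling the common factors. Treating $\xb_{t-m}$ as a fixed initial point and iterating the one-step transition $q(\xb_s|\xb_{s-1}) = \mathcal{N}(\xb_s; \sqrt{\alpha_s}\xb_{s-1}, (1-\alpha_s)\Ib)$ via the reparameterization trick, the composition of $m$ Gaussian convolutions is again Gaussian; a short induction on $m$, structurally identical to the standard derivation of $q(\xb_t|\xb_0)$, shows that the mean coefficient is $\sqrt{\bar{\alpha}_t/\bar{\alpha}_{t-m}}$ and the accumulated variance is $1 - \bar{\alpha}_t/\bar{\alpha}_{t-m}$, giving the first claimed form.

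For the posterior $q(\xb_{t-m}|\xb_t, \xb_0)$, I would apply Bayes' rule,
\[
q(\xb_{t-m}|\xb_t, \xb_0) \propto q(\xb_t|\xb_{t-m}, \xb_0)\, q(\xb_{t-m}|\xb_0),
\]
and use the Markov property of the forward chain to drop the conditioning on $\xb_0$ in the first factor, so that $q(\xb_t|\xb_{t-m}, \xb_0) = q(\xb_t|\xb_{t-m})$ is exactly the distribution just derived, while the second factor is the standard marginal $q(\xb_{t-m}|\xb_0) = \mathcal{N}(\xb_{t-m}; \sqrt{\bar{\alpha}_{t-m}}\xb_0, (1-\bar{\alpha}_{t-m})\Ib)$. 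Both factors are Gaussian in $\xb_{t-m}$, so their product is Gaussian, and I would complete the square in $\xb_{t-m}$ to read off its mean and covariance. Writing $a = \sqrt{\bar{\alpha}_t/\bar{\alpha}_{t-m}}$, $\sigma_1^2 = 1 - \bar{\alpha}_t/\bar{\alpha}_{t-m}$, and $\sigma_2^2 = 1 - \bar{\alpha}_{t-m}$, the posterior precision is $a^2/\sigma_1^2 + 1/\sigma_2^2$ and the posterior mean is $\tilde{\sigma}^2\,(a\,\xb_t/\sigma_1^2 + \sqrt{\bar{\alpha}_{t-m}}\,\xb_0/\sigma_2^2)$.

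The main obstacle is purely algebraic: verifying that these expressions collapse to the stated closed forms. The crucial simplification I would check first is that the combined variance denominator satisfies $a^2\sigma_2^2 + \sigma_1^2 = 1 - \bar{\alpha}_t$; substituting $\sigma_1^2 = (\bar{\alpha}_{t-m}-\bar{\alpha}_t)/\bar{\alpha}_{t-m}$ and expanding the numerator yields $\bar{\alpha}_{t-m}(1-\bar{\alpha}_t)/\bar{\alpha}_{t-m} = 1-\bar{\alpha}_t$. With this identity in hand, the posterior variance reduces to $\tilde{\sigma}^2 = (\bar{\alpha}_{t-m}-\bar{\alpha}_t)(1-\bar{\alpha}_{t-m})/[\bar{\alpha}_{t-m}(1-\bar{\alpha}_t)]$, and the two mean coefficients simplify, after cancelling the factors $(\bar{\alpha}_{t-m}-\bar{\alpha}_t)$ and $(1-\bar{\alpha}_{t-m})$ respectively, to the forms claimed in the theorem. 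As a sanity check, I would confirm that setting $m=1$ recovers the classical DDPM posterior $q(\xb_{t-1}|\xb_t, \xb_0)$, which it does since $\bar{\alpha}_t/\bar{\alpha}_{t-1} = \alpha_t$.
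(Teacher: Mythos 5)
Your proposal is correct and takes essentially the same route as the paper's own proof: collapse the one-step forward transitions into a single Gaussian to obtain $q(\xb_t|\xb_{t-m})$, then apply Bayes' rule with the standard marginals $q(\xb_{t-m}|\xb_0)$ and $q(\xb_t|\xb_0)$ and simplify to get the posterior. Your write-up merely makes explicit what the paper's sketch leaves implicit (the Markov-property step $q(\xb_t|\xb_{t-m},\xb_0)=q(\xb_t|\xb_{t-m})$, the completion of the square, and the $m=1$ consistency check), and all of that algebra checks out.
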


\begin{proof}[Sketch Proof]
    According to the standard diffusion forward process, we have
    \begin{align*}
        \xb_t &= \sqrt{\alpha_t}\xb_{t-1} + \sqrt{1 - \alpha_t}\epsilon
        = \sqrt{\alpha_t\cdots\alpha_{t-m+1}}\xb_{t-m} \\
        &~~~~+ \sqrt{1 - \alpha_t\cdots\alpha_{t-m+1}}\epsilon \\
        &= \sqrt{\frac{\Bar{\alpha}_t}{\Bar{\alpha}_{t-m}}}\xb_{t-m} + \sqrt{1 - \frac{\Bar{\alpha}_t}{\Bar{\alpha}_{t-m}}}\epsilon~.
    \end{align*}
    Thus, we have: $q(\xb_t|\xb_{t-m}) = \mathcal{N}\left(\xb_t; \sqrt{\frac{\Bar{\alpha}_t}{\Bar{\alpha}_{t-m}}}\xb_{t-m}, \left(1 - \frac{\Bar{\alpha}_t}{\Bar{\alpha}_{t-m}}\right)\Ib\right)$.

    In addition, we have $q(\xb_{t-m}|\xb_t, \xb_0) = \frac{q(\xb_t|\xb_{t-m})q(\xb_{t-m}|\xb_0)}{q(\xb_t|\xb_0)}$, and according to the standard diffusion, $q(\xb_{t-m}|\xb_0) = \mathcal{N}\left(\xb_{t-m}; \sqrt{\Bar{\alpha}_{t-m}}\xb_0, (1 - \Bar{\alpha}_{t-m})\Ib\right)$, $q(\xb_t|\xb_0) = \mathcal{N}\left(\xb_t; \sqrt{\alpha}_t\xb_0, (1 - \Bar{\alpha}_t)\Ib\right)$. By some simplification, we reach the posterior distribution $q(\xb_{t-m}|\xb_t, \xb_0)$ in the form of what the theorem states. 
\end{proof}

According to the standard diffusion loss in \eqref{eq:obj}, the next step is to parameterize a reverse model $p_{\thetab}(\xb_{t-m}|\xb_t)$ to match the skipped-step posterior distribution $q(\xb_{t-m}|\xb_t, \xb_0)$ for each time step $t$. From Theorem~\ref{them:posterior}, the forward posterior still follows a Gaussian distribution. Thus, it is natural parameterize the reverse model with Gaussian distributions as well. Consequently, matching the two posterior distributions as defined in \eqref{eq:obj} translates to matching the means of the two Gaussians. To this end, note that from $q(\xb_t|\xb_0)$, we have
\begin{align}\label{eq:predx0}
    \xb_0 = \frac{1}{\sqrt{\Bar{\alpha}_t}}\xb_t - \frac{\sqrt{1 - \Bar{\alpha}_t}}{\sqrt{\Bar{\alpha}_t}}\epsilon~.
\end{align}
Thus, by substituting \eqref{eq:predx0} into the mean of $q(\xb_{t-m}|\xb_t, \xb_0)$ derived in Theorem~\ref{them:posterior}, the forward posterior mean can be reformulated as
\begin{align}\label{eq:forwardposter}
    \mu_t &= \frac{\sqrt{\Bar{\alpha}_t}(1 - \Bar{\alpha}_{t-m})\xb_t + (\Bar{\alpha}_{t-m} - \Bar{\alpha}_t)(\frac{1}{\sqrt{\Bar{\alpha}_t}}\xb_t - \frac{\sqrt{1 - \Bar{\alpha}_t}}{\sqrt{\Bar{\alpha}_t}}\epsilon)}{\sqrt{\Bar{\alpha}_{t-m}}(1 - \Bar{\alpha}_t)} \nonumber \\
    &= \frac{\Bar{\alpha}_{t-m}}{\sqrt{\Bar{\alpha}_t\Bar{\alpha}_{t-m}}}\xb_t - \frac{\Bar{\alpha}_{t-m} - \Bar{\alpha}_t}{\sqrt{\Bar{\alpha}_t\Bar{\alpha}_{t-m}(1 - \Bar{\alpha}_t)}}\epsilon~.
\end{align}
Based on the above posterior mean formula in \eqref{eq:forwardposter}, we can simply parameterize the reverse denoise network to predict the noise $\epsilon$ of the forward process as $\epsilon_{\thetab}\left(\xb_t, t\right)$, similar to standard diffusion model parametrization. Specifically, let $p_{\thetab}(\xb_{t-m}|\xb_t) = \mathcal{N}(\xb_{t-m}; \mu_{\thetab}(\xb_t, t, m), \sigma_t^2)$ and $$\mu_{\thetab}(\xb_t, t, m) \triangleq \frac{\Bar{\alpha}_{t-m}}{\sqrt{\Bar{\alpha}_t\Bar{\alpha}_{t-m}}}\xb_t - \frac{\Bar{\alpha}_{t-m} - \Bar{\alpha}_t}{\sqrt{\Bar{\alpha}_t\Bar{\alpha}_{t-m}(1 - \Bar{\alpha}_t)}}\epsilon_{\thetab}(\xb_t, t)~.$$
Matching $p_{\thetab}(\xb_{t-m}|\xb_t)$ and $q(\xb_{t-m}|\xb_t, \xb_0)$ with KL-divergence leads to the following iterative process for training the skipped-step reverse model:
\begin{itemize}
    \item Sample $t\sim [1, T]$; then sample $m\sim [1, \max\{1, t-1\}]$.
    \item Sample $\xb_{t}|\xb_0 \sim q(\xb_{t}|\xb_0)$.
    \item Sample noise $\epsilon\sim\mathcal{N}(0, \Ib)$.
    \item Optimize: 
    \begin{align}\label{eq:multistepobj}
        \min_{\thetab}J \triangleq \mathbb{E}_{\xb_0, \epsilon}\left[\frac{(\Bar{\alpha}_{t-m} - \Bar{\alpha}_t)^2}{2\sigma_t^2\Bar{\alpha}_t\Bar{\alpha}_{t-m}(1 - \Bar{\alpha}_t)}\left\|\epsilon - \epsilon_{\thetab}\left(\xb_t, t\right)\right\|^2\right]~.
    \end{align}
\end{itemize}

\begin{algorithm}[tb]
\caption{Skipped-Step Sampling in Diffusion Models}
\label{alg:skipsample}
\textbf{Input}: A pretrained diffusion model \\
\textbf{Output}: A generated sample
\begin{algorithmic}[1] 
\STATE Initialize $\xb\sim\mathcal{N}(0, \Ib)$.
\STATE Set $m$ to be some step interval. Set $t = T$.
\WHILE{$t > 0$}
\STATE $t^\prime = \max\{t-m, 0\}$; 
\STATE $\epsilon \sim \mathcal{N}(0, \Ib)$ if $t > m$, else $\epsilon = 0$.
\STATE $\xb^\prime = \frac{\Bar{\alpha}_{t^\prime}}{\sqrt{\Bar{\alpha}_t\Bar{\alpha}_{t^\prime}}}\xb - \frac{\Bar{\alpha}_{t^\prime} - \Bar{\alpha}_t}{\sqrt{\Bar{\alpha}_t\Bar{\alpha}_{t^\prime}(1 - \Bar{\alpha}_t)}}\epsilon_{\thetab}\left(\xb, t\right) + \sqrt{\frac{(\Bar{\alpha}_{t^\prime} - \Bar{\alpha}_t)(1 - \Bar{\alpha}_{t^\prime})}{\Bar{\alpha}_{t^\prime}(1 - \Bar{\alpha}_t)}}\epsilon$.
\STATE $t = t^\prime$; $\xb = \xb^\prime$.
\STATE Set $m$ to some value by following some specific user-defined scheme.
\ENDWHILE
\STATE \textbf{return} $\xb_0$
\end{algorithmic}
\end{algorithm}

\begin{table*}[t!]
    \centering
    \caption{Quantitative comparison with state-of-the-art methods on ImageNet dataset. Our naive Skipped-Step sampling achieves higher IS compared to DDIM; while the mixed version consistently outperforms DDIM in terms of both IS and FID scores over various sampling step scenarios.}\label{fig:score}
    \scalebox{0.9}{
    \begin{tabular}{llc c c cc cc c c c }
    \toprule
    \multirow{2}{*}{\textbf{Method}}& 
    \multirow{2}{*}{\textbf{Dataset}}& 
   \multicolumn{2}{c}{\textbf{1000 steps}} & \multicolumn{2}{c}{\textbf{500 steps}} & \multicolumn{2}{c}{\textbf{100 steps}} & 
   \multicolumn{2}{c}{\textbf{50 steps}} & 
   \multicolumn{2}{c}{\textbf{25 steps}}  \\
      \cmidrule(lr){3-4} \cmidrule(lr){5-6} \cmidrule(lr){7-8}
      \cmidrule(lr){9-10}
      \cmidrule(lr){11-12}
        & 
   & IS&FID($\downarrow$) &IS&FID($\downarrow$) & IS&FID($\downarrow$)&
IS&FID($\downarrow$)&IS&FID($\downarrow$) \\

    \midrule
DDPM &\multirow{4}{*}{ImageNet}&  91.69&9.41&91.01&9.63&88.51&\textbf{10.02}& 83.42&\textbf{11.42}&69.44&16.75\\
	DDIM& &  82.34&10.88& 80.25&11.21&78.32&12.04&77.91&12.10&73.64&14.26\\
	Skipped-Step && \textbf{92.96} &9.92&88.26&10.59&82.11&14.55&72.27&18.43&65.65&20.83\\
    Mix& &92.72&\textbf{9.29}&\textbf{92.59}&\textbf{9.61}&\textbf{91.88}&10.35&\textbf{85.46}&11.78&\textbf{77.02}&\textbf{14.15}\\
    \bottomrule
    \end{tabular}
    }
    \vspace{-0.1cm}
\end{table*}

\begin{table*}[t!]
    \centering
    \caption{Quantitative comparison with state-of-the-art methods on LSUN\_bedroom dataset. We report the precision metric to measure the fraction of generated samples that lie on the data manifold, and the recall metric to measures the fraction of real data that can be matched by generated samples. }\label{fig:score2}
    \scalebox{0.78}{
    \begin{tabular}{llc c cccccc cc cc c c c }
    \toprule
    \multirow{2}{*}{\textbf{Method}}& 
    \multirow{2}{*}{\textbf{Dataset}}& 
   \multicolumn{3}{c}{\textbf{1000 steps}} & \multicolumn{3}{c}{\textbf{500 steps}} & \multicolumn{3}{c}{\textbf{100 steps}} & 
   \multicolumn{3}{c}{\textbf{50 steps}} & 
   \multicolumn{3}{c}{\textbf{25 steps}}  \\
      \cmidrule(lr){3-5} \cmidrule(lr){6-8} \cmidrule(lr){9-11}
      \cmidrule(lr){12-14}
      \cmidrule(lr){15-17}
        & 
   & FID($\downarrow$) &Prec& Rec&FID($\downarrow$)&Prec& Rec & FID($\downarrow$)&Prec& Rec&
FID($\downarrow$)&Prec& Rec&FID($\downarrow$)&Prec& Rec \\

    \midrule
    DDPM &\multirow{4}{*}{LSUN}& 2.44&0.66&0.54& 3.63&\textbf{0.70}&\textbf{0.72}&7.02&\textbf{0.68}&0.71&\textbf{7.82}&\textbf{0.66}&0.70& 11.75&0.59&0.66\\
    DDIM& & 2.58&0.65&0.52& 4.21&0.67&0.72&7.14&0.65&\textbf{0.72}&9.10&0.65&\textbf{0.71}&11.26&0.62&\textbf{0.69}\\
    Skipped-Step &&2.42&\textbf{0.68}&\textbf{0.57}&3.69&0.68& 0.69&6.55&0.65&0.67& 8.93&0.59&0.63&12.83&0.52&0.61 \\
    Mix& &\textbf{2.38}&0.67&0.53&\textbf{3.61}&0.68&0.71&\textbf{6.35}&\textbf{0.68}&0.70&7.88&\textbf{0.66}&0.69&\textbf{10.15}& \textbf{0.62}&0.67 \\

    \bottomrule
    \end{tabular}
    }
    \vspace{-0.1cm}
\end{table*}

The loss \eqref{eq:multistepobj} is the same as the one adopted in DDPM, except with different coefficients at each time $t$. However, we can adopt the ``simple loss'' idea in DDPM to remove the coefficients, leading to exactly the same loss function as DDPM. This indicates that training the skipped-step reverse model is the same as training the reverse model in DDPM. In other words, pretrained diffusion models are secretly skipped-step samplers. Algorithm~\ref{alg:skipsample} describes how to apply a pretrained diffusion model for skipped-step generation.

\begin{remark}
    There is a common practice of using a subset of the whole time indexes in DDPM sampling for generation in order to reduce sampling steps. Compared to our update rule, {\it i.e.},  $\xb^\prime$ in Algorithm~\ref{alg:skipsample}, the common practice adopts the original coefficients, which lacks a formal justification, leading to sub-optimal results. 
\end{remark}

\subsection{Enhancing Skipped-Step Inference with DDIM Integration}

In practical scenarios, we observe that using DDIM can markedly enhance generation quality, particularly when initializing with less noisy inputs. To leverage this advantage, we introduce a novel approach that integrates our skipped-step sampling method with DDIM. We define $t_c$ as a cutoff point. For steps taken after the cutoff point (i.e., $t\geq t_c$), we apply skipped-step sampling, and for the rest steps (i.e., $t<t_c$), we switch to DDIM for sampling. Specifically, we first apply our skipped-step sampler for $k_c$ iterations to produce a preliminary noisy generation. This intermediate result is then used as the input for DDIM, which is executed for an additional $T-k_c$ iterations.

The cutoff point parameters $k_c$ will determine the duration each sampling method is applied, which will be systematically ablated in our experiments. Our empirical findings indicate that this combined approach significantly improves the quality of the generated samples, achieving state-of-the-art results. This integration harnesses the strengths of both sampling techniques, effectively optimizing the generation process.

\begin{figure*}[t!]
\centering     
\includegraphics[width=\linewidth]{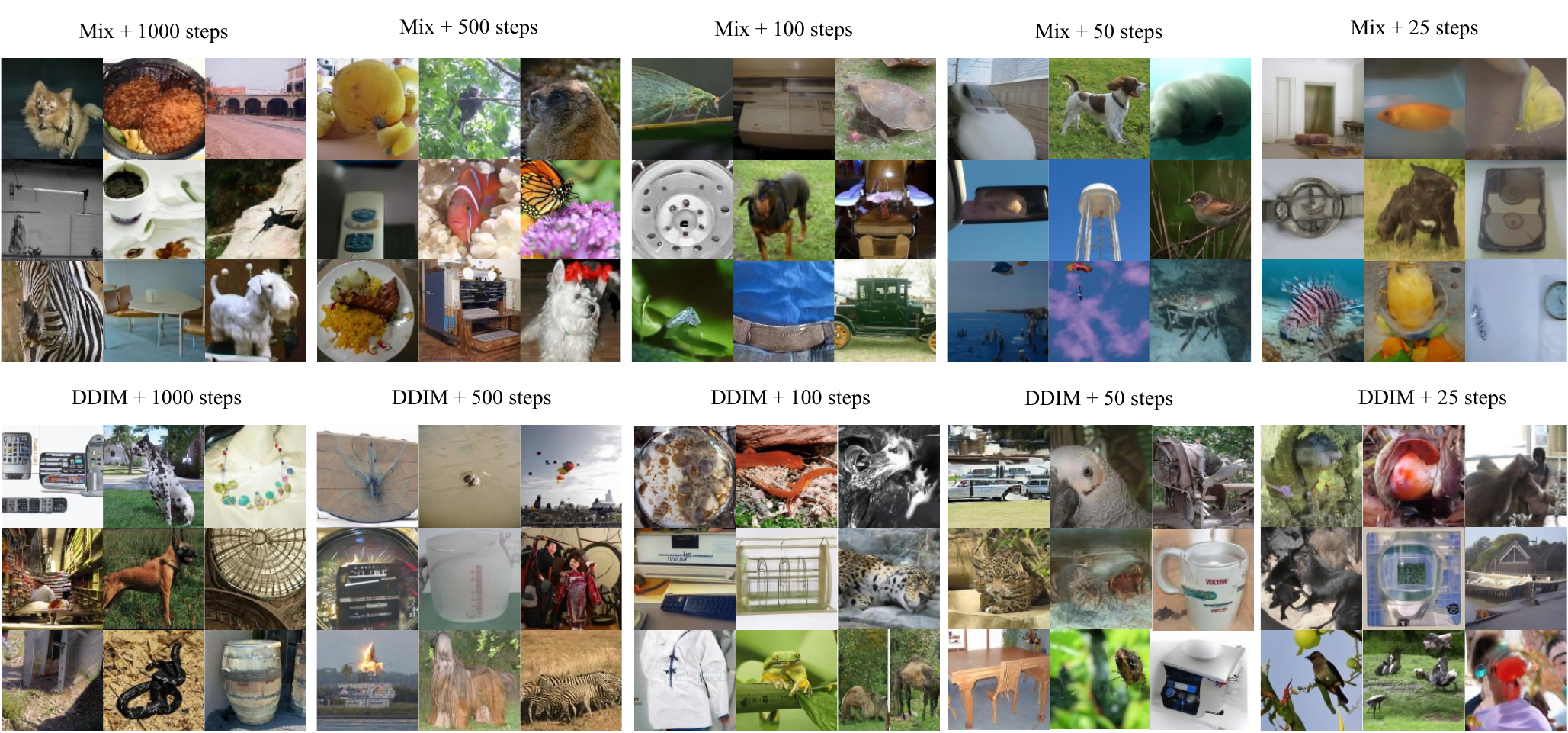}
\caption{Samples from the our mix version sampler and the DDIM sampler using different sampling steps.}
\label{fig:imagenet}
\vspace{-0.3cm}
\end{figure*}

\section{Related Works}
Since the development of diffusion models \cite{sohl2015deep,ho2020denoising,ren2024multi,li2023alleviating,xu2024disentangled, shen2025illumidiff}, there has been active research trying to address the inherent inefficiency of iterative denoising. Various acceleration techniques have been explored. \citet{song2021score} formulated denoising as solving a stochastic differential equation (SDE), which led to the development of score-based generative modeling. Their subsequent work \cite{song2020denoising} introduced an ODE-based formulation for faster sampling. Similarly, \citet{lu2022dpmsolver} developed DPM-Solver, a numerical method for efficiently solving diffusion ODEs, significantly reducing sampling steps. 
In parallel, the work by \citet{nichol2021improved} improved upon DDPMs by introducing a non-Markovian diffusion process, resulting in the Improved Denoising Diffusion Probabilistic Models (IDDPMs). Building on this, \citet{song2020denoising} proposed Denoising Diffusion Implicit Models (DDIM), which further enhanced sampling speed by using a non-Markovian process that retains high sample quality.

Research has also explored architectural optimizations and hardware accelerations. \citet{dhariwal2021diffusion} proposed enhanced neural network architectures tailored for diffusion models, balancing computational cost and generative performance. Additionally, guided sampling techniques, such as classifier guidance \cite{dhariwal2021diffusion}, have been shown to improve sample quality without significantly increasing inference time.

More recent works have focused on practical deployment and efficiency in real-world scenarios. \citet{xiao2021tackling} addressed the challenge of real-time inference by proposing techniques for reducing the number of required sampling steps while maintaining high fidelity in generated samples. \citet{watson2022accelerating} explored parallelization and optimization strategies for accelerating diffusion model inference on modern hardware.

In contrast to the aforementioned works, our proposed method tries to tackle the efficient inference problem from another perspective by discovering an intrinsic property of pretrained diffusion model, which directly allows skipped-step sampling. Our proposed method is orthogonal to existing methods, thus they can be seamlessly combined to get the best of both worlds.

\begin{figure}[t!]
\centering     
\includegraphics[width=\linewidth]{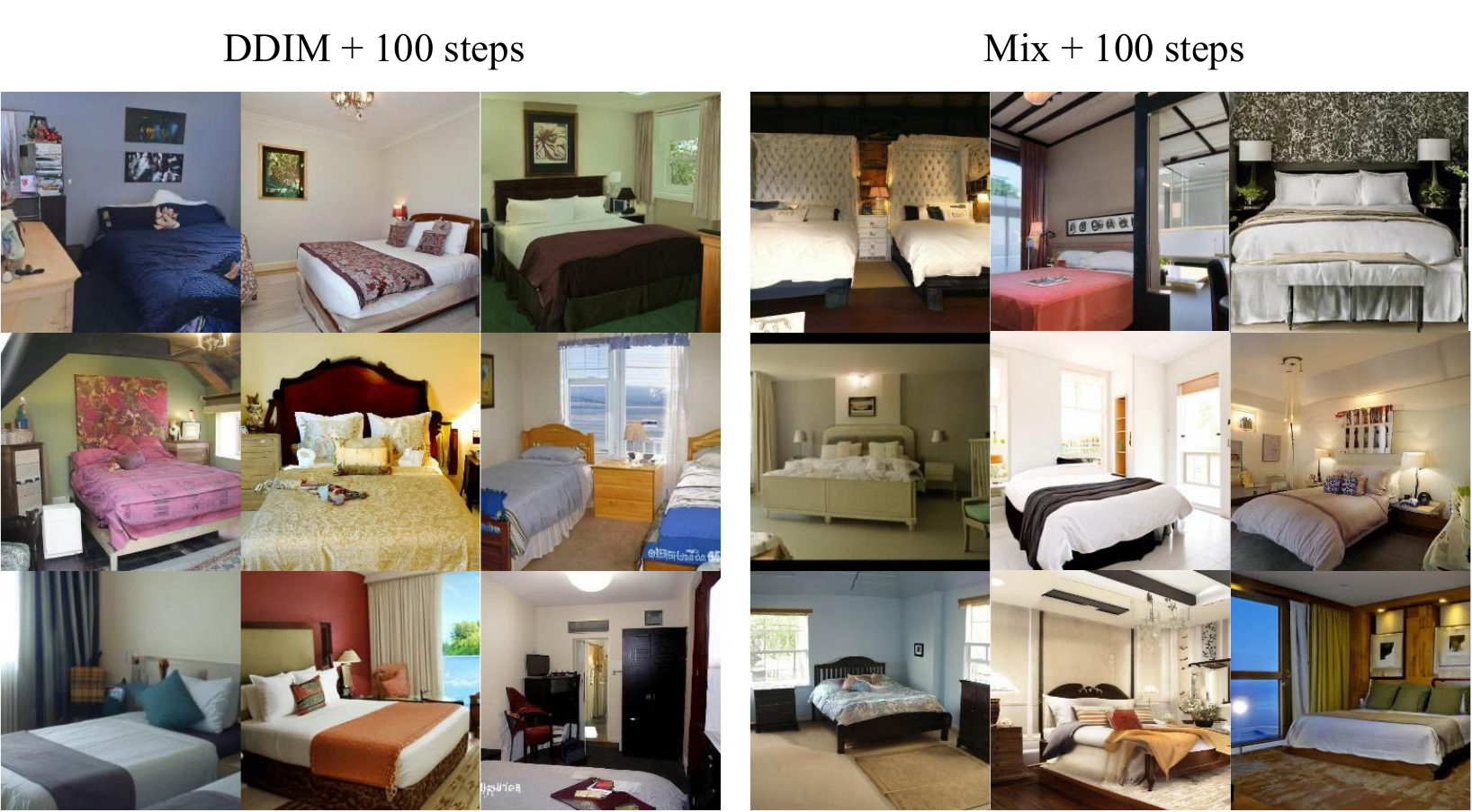}
\vspace{-0.3cm}
\caption{Samples from the our mix version sampler and the DDIM sampler using 100 sampling steps.}
\label{fig:lsun}
\end{figure}


\begin{figure*}[t!]
    \centering
    \begin{subfigure}[t]{0.33\textwidth}
        \centering
\includegraphics[height=0.7\textwidth]{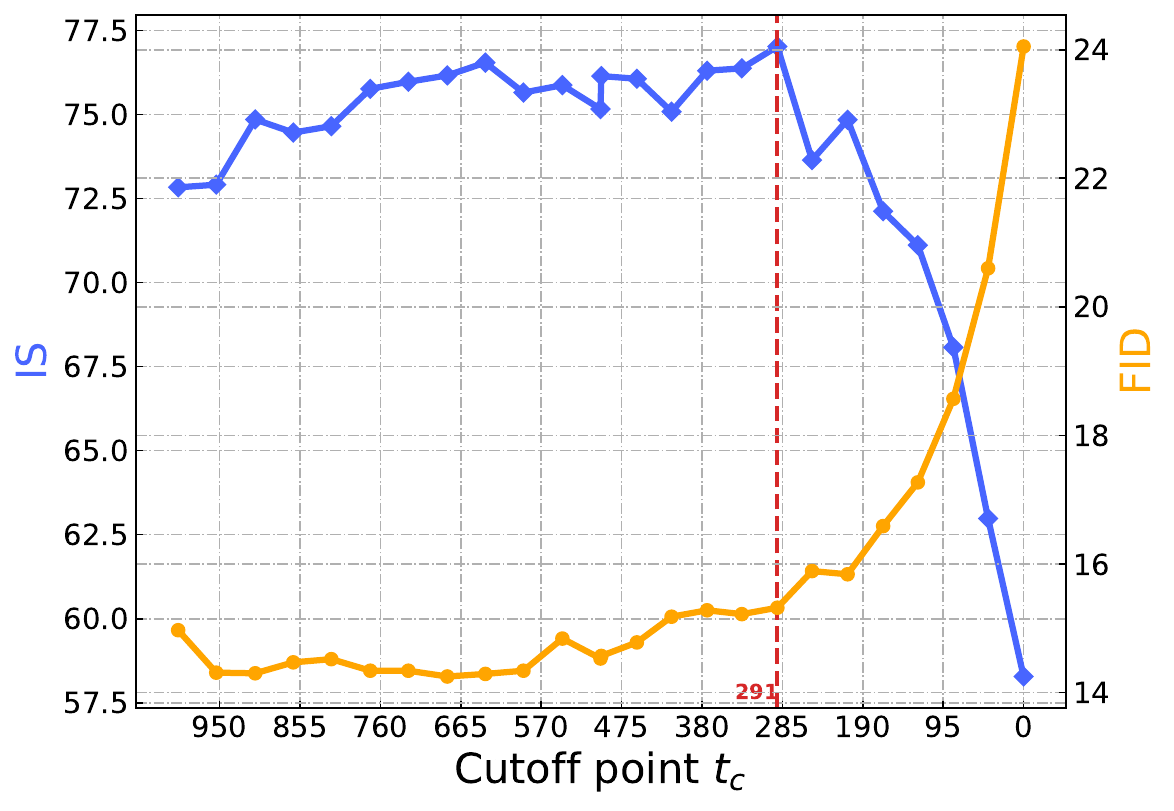}
        \caption{25 steps}
    \end{subfigure}
    \begin{subfigure}[t]{0.33\textwidth}
        \centering
\includegraphics[height=0.7\textwidth]{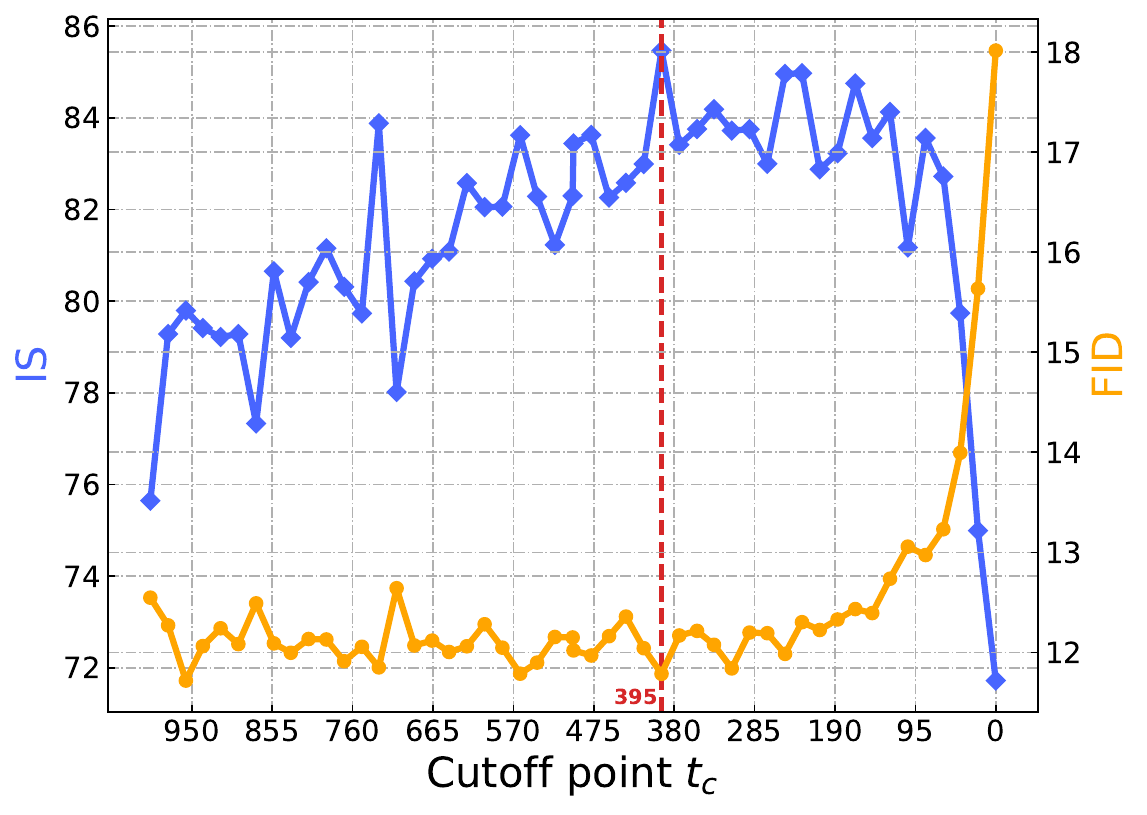}
        \caption{50 steps}
    \end{subfigure}
    \begin{subfigure}[t]{0.33\textwidth}
        \centering
\includegraphics[height=0.7\textwidth]{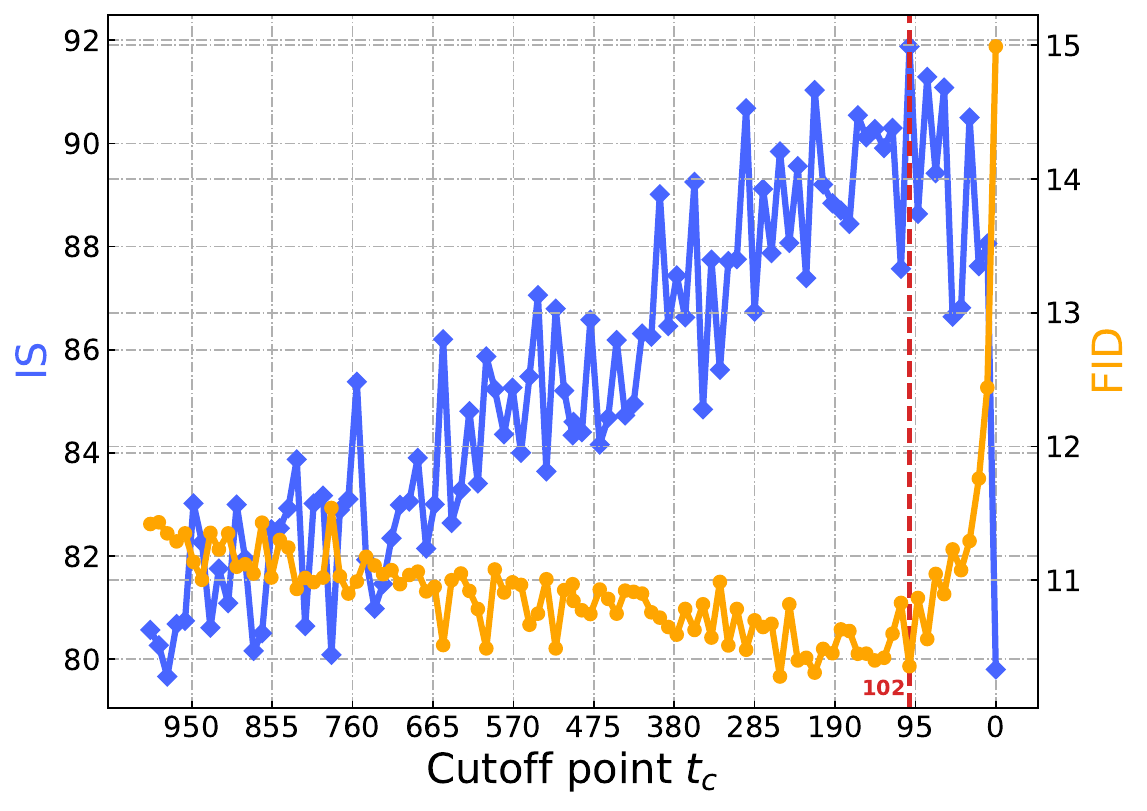}
        \caption{100 steps}
    \end{subfigure}
    \caption{Trade-off between FID (orange) and IS (blue) with different cut-off points under different inference steps.}
\label{fig:ablation_study}
\end{figure*}


\section{Experiments}

In this section, we conduct experiments to demonstrate that our method can be easily applied to various pretrained diffusion models, outperforming baseline methods where fewer sampling iterations are needed in generation. Throughout the paper, we use ``Skipped-Step'' to denote our naive version in Algorithm~\ref{alg:skipsample}, and use ``Mix'' to denote our enhanced version with DDIM.

\subsection{Experiment Setup}

\paragraph{Baseline Models} 
We test our proposed method on the pretrained ADM \cite{dhariwal2021diffusion}, Stable Diffusion v2 (SD2.1-base) \cite{rombach2022high}, and Open Sora v1.1 \cite{opensora} models. Our method does not need to change the training procedure. Thus, we reuse the pretrained base models trained with $T=1000$, and modify the sampling procedure in the public code to reflect our method. We compare our method with the default DDPM and DDIM samplers implemented in the codebases.

\paragraph{Dataset} 
We adopt the same evaluation datasets for the aforementioned pretrained models. Specifically, we sample the ADM trained on ImageNet \cite{deng2009imagenet} and LSUM\_bedroom \cite{yu2015lsun} datasets, respectively. And we generate evaluation image samples from Stable Diffusion with 30k captions sampled from the MS COCO 2014 eval split dataset \cite{lin2014microsoft}. To verify the performance of employing our sampling method to OpenSora, we conduct video generation based on the VBench benchmark \cite{huang2023vbench}.

\paragraph{Evaluation Metrics}
For image generation, we follow standard setup to adopt the inception score (IS) and FID score to measure quality and diversity of generated images \cite{InceptionFID,heusel2017gans,salimans2016improved}. On LSUN\_bedroom dataset, we report the Recall and Precision scores \cite{kynkaanniemi2019improved}. For video generation, we follow open Sora to calculate the quality score and semantic score to measure the quality of the generated videos \cite{opensora}.


\begin{figure*}[ht!]
\centering     
\includegraphics[width=\linewidth]{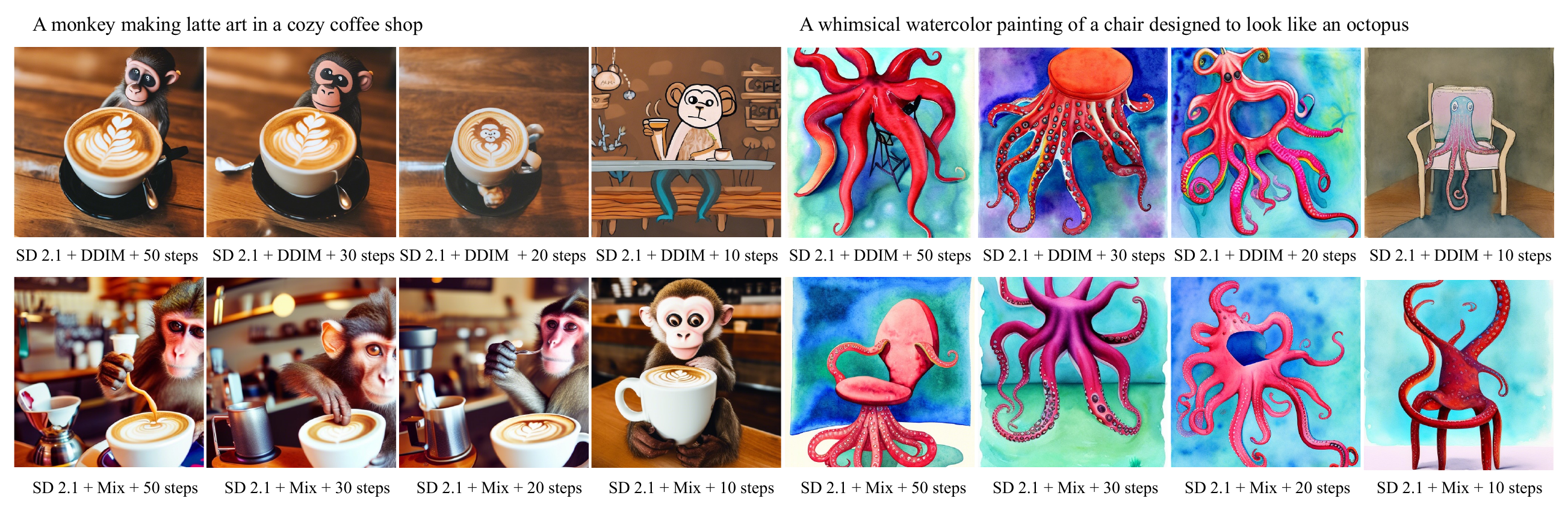}
\caption{Example comparisons of our method and DDIM sampling on pretrained stable diffusion models. Visually, our method can generate images that align better with the prompts.}
\label{fig:sd_ex}
\end{figure*}

\subsection{Experiments on ADM}
We first test our sampling method based on the pretrained ADM to demonstrate the ability of our method in improving image generation with fewer sampling steps. We consider two variants of our method: the vanilla Skipped-Step version as demonstrated in Algorithm~\ref{alg:skipsample}, and the improved version by mixing with DDIM, where we apply $k_c$ skipped-step sampling to generate an initial noisy input for the rest of steps with DDIM sampling. 

We vary the total sampling steps, each corresponds to a different IS-FID pair. 
We list the specific scores under different sampling steps in Table~\ref{fig:score}. It is observed that DDPM can outperform DDIM when the sampling steps are set relatively large. Our vanilla skipped-step sampling version can have better IS scores than DDIM. Remarkably, when mixing our vanilla version with DDIM, we obtain the best performance in terms of both IS and FID scores, under different sampling steps. Figure~\ref{fig:imagenet} and Figure~\ref{fig:lsun} demonstrate the qualitative comparison between our proposed method and DDIM sampler. It is clear that our method created images with better visual quality under different sampling steps. 

\begin{table}[t!]
    \centering
    \caption{Quantitative comparison with stable diffusion for image generation in terms of IS and FID score. Our method obtains better IS and FID scores over various sampling steps.}\label{tab:score_sd}
    \scalebox{0.74}{
    \begin{tabular}{lcccccccc}
    \toprule
	 \multirow{2}{*}{\textbf{Method}}& \multicolumn{2}{c}{\textbf{50 steps}} & \multicolumn{2}{c}{\textbf{30 steps}} &  \multicolumn{2}{c}{\textbf{20 steps}}&  \multicolumn{2}{c}{\textbf{10 steps}}  \\
      \cmidrule(lr){2-3} \cmidrule(lr){4-5} \cmidrule(lr){6-7}
      \cmidrule(lr){8-9}
   &  IS&FID($\downarrow$)&IS&FID($\downarrow$)&IS&FID($\downarrow$)&IS&FID($\downarrow$) \\
    \midrule
	SD\_2.1 &39.11& 17.48&38.15&19.23&37.31&20.23&30.58&25.64\\
    Mix &39.14&16.97&38.39&18.16&38.01&19.13&31.76&24.29\\  
    \bottomrule
    \end{tabular}
    }
    \vspace{-0.1cm}
\end{table}


\begin{figure}[ht!]
\centering     
\includegraphics[width=\linewidth]{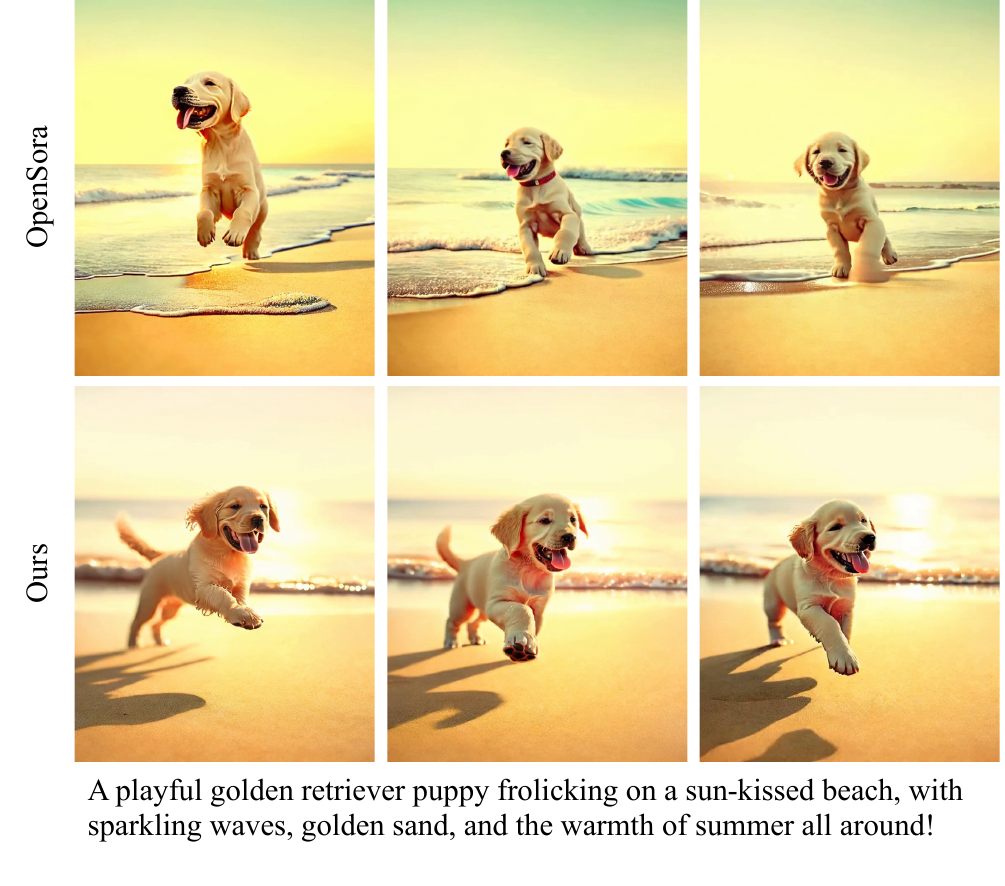}
\vspace{-0.3cm}
\caption{Example generation comparisons of Open Sora and our method. Our method generates videos that are better in both video quality and prompt alignments.}
\label{fig:sora}
\end{figure}

\paragraph{Ablation Study on DDIM Integration} To validate the benefits of integrating DDIM into our skipped-step sampler, we conducted an ablation study comparing the performance of different cutoff points $t_c$ in our mixed-version sampler. Figure~\ref{fig:ablation_study} presents the metric scores (FID and IS) achieved by various configurations of our sampler using different step sizes. The results highlight that integrating DDIM significantly enhances image generation quality, as evidenced by improved FID and IS scores. Furthermore, we observed that the optimal range for cutoff values lies within [100, 400]. Specifically, when sampling with a total of 25 steps, a smaller cutoff point (e.g., 291) is preferred compared to scenarios involving more DDIM sampling steps. A potential explanation is that the increased skipped-step sampling steps contribute to a more stable initial prediction, allowing the DDIM sampling phase to better refine the quality of the generated images.

\subsection{Experiments on Stable Diffusion}
Next, we test our method on the more scalable stable diffusion model (SD v2.1). Similarly, we replace the DDIM sampler in the codebase with our method and leave the other parts unchanged. For quantitative evaluation, we follow existing work to promote the pretrained stable diffusion with 30k captions from the MS COCO dataset, which are fed to the stable diffusion for text-to-image generation. Table~\ref{tab:score_sd} summarizes the results of our method (the best performing mixing-with-DDIM version) compared with the DDIM sampler in terms of IS and FID scores under different sampling steps. Consistently, our method achieves higher IS and lower FID scores, demonstrating the advantage of our method to achieve higher generative image quality and better diversity.
In addition, Figure~\ref{fig:sd_ex} plots some generated examples from both our method and the baseline DDIM sampler. It is interesting to see that our method tends to generate images that can better align with the prompts, {\em e.g.}, in the second example, all generated images align with the ``chair designed to look like an octopus'' tune specified in the prompt.

\begin{figure}[ht!]
\centering     
\includegraphics[width=\linewidth]{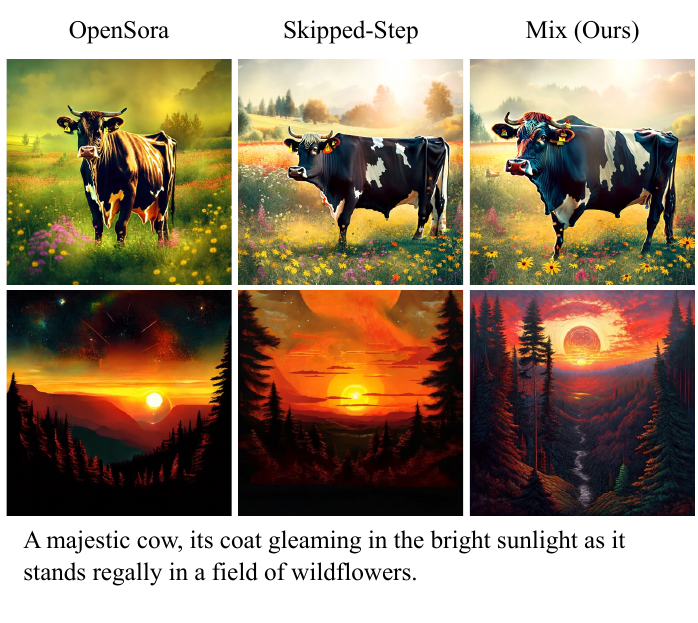}
\vspace{-0.3cm}
\caption{Example generation comparisons of Open Sora and our method. Our method generates videos that are better in both video quality and prompt alignments.}
\label{fig:sora}
\end{figure}

\begin{table}[t!]
    \centering
    \caption{Quantitative comparison with OpenSora for video generation. Our method obtains better results in all score metrics.}
    \label{tab:sora}
    \scalebox{0.9}{
    \begin{tabular}{lccccc}
    \toprule
   \textbf{Method}&\textbf{Total Score}& \textbf{Quality Score}&\textbf{Semantic Score}  \\
    \midrule
	OpenSora\_1.1 &74.26\%& 77.30\%&62.12\%\\
    Mix &74.66\%&77.51\%&63.27\%\\ 
    \bottomrule
    \end{tabular}
    }
    \vspace{-0.1cm}
\end{table}

\subsection{Experiments on Open Sora}
Finally, we test our method on the recent trending topic of video generation. We adopt one of the open-source state-of-the-art baselines -- open Sora, which adopts diffusion models for video generation \cite{opensora}. Again, we implement our sampling method inside the codebase and leave other settings unchanged. For quantitative evaluation, we adopt the same evaluation metrics as in open Sora baseline, which adopts the quality score, semantic score, and the total score, to measure both the generated quality and semantics of generated videos. 
Table~\ref{tab:sora} summarizes the results of our method and the DDIM baseline adopted in open Sora. Consistently, our method outperforms the baseline in all metrics. For visual illustration, Figure~\ref{fig:sora} plots some example generated videos with both our method and the DDIM baseline. Similar to the stable diffusion case, our method can generate higher-quality videos that tend to align better to the prompts, {\it e.g.}, with the prompt `` a small cactus with a happy face in the Sahara desert'', our method indeed can generate cactus with a face, whereas the DDIM sampler fails.

\section{Conclusion}
We discover an intrinsic property of pretrained diffusion models, which allows skipped-step sampling for accelerated generation in diffusion models for free. Our method is orthogonal to existing acceleration methods, which allows direct integration with these methods to achieve the best of both worlds. Extensive experiments on different scenarios including image generation and video generation demonstrate the effectiveness of our proposed method, achieving improved results compared to strong baselines. There are a number of interesting future directions worth further exploring. For example, how to incorporate the technique into training to further accelerate diffusion sampling, and how to incorporate the technique with other state-of-the-art diffusion models such as the consistency model \cite{Song2023ConsistencyM}.




\newpage
\bibliographystyle{named}
\bibliography{reference}

\end{document}